\begin{document}

\title{On- and Off-Policy Monotonic Policy Improvement}

\author{\name 
          Ryo Iwaki 
        \email 
          ryo.iwaki@ams.eng.osaka-u.ac.jp \\
        \addr 
          Department of Adaptive Machine Systems\\
          Graduate School of Enginieering\\
          Osaka University\\
          2-1, Yamadaoka, Suita city, Osaka, Japan
        \AND
        \name
          Minoru Asada
        \email
          asada@ams.eng.osaka-u.ac.jp \\
        \addr 
          Department of Adaptive Machine Systems\\
          Graduate School of Enginieering\\
          Osaka University,\\
          2-1, Yamadaoka, Suita city, Osaka, Japan}

\maketitle

\begin{abstract}
  Monotonic policy improvement and off-policy learning are two main desirable properties 
  for reinforcement learning algorithms.
  In this paper, by lower bounding the performance difference of two policies,
  we show that the monotonic policy improvement is guaranteed 
  from on- and off-policy mixture samples.
  An optimization procedure which applies the proposed bound can be regarded as an off-policy natural policy gradient method.
  In order to support the theoretical result, 
  we provide a trust region policy optimization method using experience replay
  as a naive application of our bound,
  and evaluate its performance in two classical benchmark problems.
\end{abstract}

\section{Introduction}

Reinforcement learning (RL) aims to optimize the 
behavior of an agent which interacts sequentially with an unknown environment 
in order to maximize the long term future reward.
There are two main desirable properties for RL algorithms:
{\itshape monotonic policy improvement} and {\itshape off-policy learning}.

If the model of environment is available and the state is fully observable,
a sequence of greedy policies generated by {\itshape policy iteration} scheme are guaranteed to improve monotonically.
However, in the {\itshape approximate policy iteration} including RL,
the generated policy could perform worse and lead to policy oscillation or policy degradation 
\citep{Bertsekas2011,Wagner2011,Wagner2014}.
In order to avoid such phenomena, 
there are some efforts to guarantee the monotonic policy improvement
\citep{Kakade2002_CPI,Pirotta2013_SPI,Schulman2015,Thomas2015_HCPI,Abbasi2016}.

On the other hand, the use of off-policy data is also very crucial for real world applications.
In an off-policy setting, a policy which generate the data is different from a policy to be optimized.
There are many theoretical efforts to efficiently use off-policy data 
\citep{Precup2000,Maei2011_PhD,Degris2012_Off,Zhao2013,Silver2014,Thomas2015_HCOPE,Harutyunyan2016,Munos2016}.
Off-policy learning methods enable the agent to, for example,
optimize huge function approximators effectively \citep{Mnih2015,Wang2017_ICLR}
and learn a complex policy for humanoid robot control in real world by reusing very few data \citep{Sugimoto2016}.

The goal of this paper is to show that the monotonic policy improvement is guaranteed in the on- and off-policy mixture setting.
Extending the approach by \cite{Pirotta2013_SPI},
we derive a general performance bound for on- and off-policy mixture samples.
An optimization procedure which applies the proposed bound can be regarded as an off-policy natural policy gradient method.
In order to support the theoretical result, 
we provide a trust region policy optimization (TRPO) method \citep{Schulman2015} using experience replay \citep{Lin1992}
as a naive application of our bound,
and evaluate its performance in two classical benchmark problems.

\section{Preliminaries}

We consider an infinite horizon discounted Markov decision process (MDP).
An MDP is specified by a tuple $(\mathcal{S},\mathcal{A},\mathcal{P},\mathcal{R},\rho_0,\gamma)$.
$\mathcal{S}$ is a finite set of possible states of an environment
and $\mathcal{A}$ is a finite set of possible actions which an agent can choose.
$\mathcal{P}: \mathcal{S} \times \mathcal{A} \times \mathcal{S} \to \mathbb{R}$ is a Markovian state transition probability distribution,
$\mathcal{R}: \mathcal{S} \times \mathcal{A} \to \mathbb{R}$ is a bounded reward function,
$\rho_{0}: \mathcal{S} \to \mathbb{R}$ is a initial state distribution,
and $\gamma \in (0, 1)$ is a discount factor.
We are interested in the model-free RL, 
thus we suppose that $\mathcal{P}$ and $\mathcal{R}$ are unknown.

Let $\pi$ be a {\itshape policy} of the agent;
if the policy is deterministic, $\pi$ denotes the mapping between the state and action spaces,
$\pi: \mathcal{S} \to \mathcal{A}$,
and if the policy is stochastic, $\pi$ denotes the distribution over the state-action pair,
$\pi: \mathcal{S} \times \mathcal{A} \to \mathbb{R}$.
For each policy $\pi$,
there exists an unnormalized $\gamma$-discounted future state distribution 
for the initial state distribution $\rho_{0}$,
$\rho^{\pi}(s) = \sum_{t=0}^{\infty} \gamma^{t} {\rm Pr}\,(s_{t}=s|\pi,\rho_{0})$.
We define the state value function $V^{\pi}(s)$,
the action value function $Q^{\pi}(s,a)$,
and the advantage function $A^{\pi}(s,a)$
for the policy $\pi$
as follows:
\begin{align*}
  V^{\pi}(s) 
  &= {\mathbb{E}} 
  \left[\sum_{t=0}^{\infty} \gamma^{t} \mathcal{R}(s_t,a_t) \,| s_{0}=s \right] 
  ,
  \\
  Q^{\pi}(s,a) 
  &= {\mathbb{E}} 
  \left[\sum_{t=0}^{\infty} \gamma^{t} \mathcal{R}(s_t,a_t) \,| s_{0}=s,a_{0}=a \right]
  ,
  \\
  A^{\pi}(s,a) &= Q^{\pi}(s,a) - V^{\pi}(s)
  .
\end{align*}
Note that the following Bellman equations hold:
\begin{align*}
  V^{\pi}(s)
   &=  \sum_{a\in\mathcal{A}} \pi(a|s)\left( \mathcal{R}(s,a)
    + \gamma \sum_{s'\in\mathcal{S}} \mathcal{P}(s'|s,a) V^{\pi}(s') \right)
  ,
  \\
  Q^{\pi}(s,a) 
   &= \mathcal{R}(s,a)
    + \gamma \sum_{s'\in\mathcal{S}} \mathcal{P}(s'|s,a) 
     \sum_{a'\in\mathcal{A}} \pi(a'|s') Q^{\pi}(s',a')
  .
\end{align*}
Furthermore, we define the advantage of a policy $\pi'$ over the policy $\pi$ for each state $s$:
\begin{align*}
  {\bar A}^{\pi}_{\pi'}(s)
  &= \sum_{a\in\mathcal{A}} \pi'(a|s) A^{\pi}(s,a)
  = \sum_{a\in\mathcal{A}} \left( \pi'(a|s) -  \pi(a|s) \right) Q^{\pi}(s,a)
  .
\end{align*}
The purpose of the agent is to find a policy $\pi^{*}$ which maximizes 
the expected discounted reward $\eta(\pi)$:
\begin{align*}
  \pi^{*} \in \underset{\pi}{\rm arg\,max}~ &\eta(\pi),
\end{align*}
where
\begin{align*}
  \eta(\pi)
   &= \sum_{s\in\mathcal{S}} \rho_{0} V^{\pi}(s)
   = \sum_{s\in\mathcal{S}} \rho^{\pi}(s) \sum_{a\in\mathcal{A}} \pi(a|s) \mathcal{R}(s,a).
\end{align*}

In the following, we use the matrix notation for the previous equations as in \cite{Pirotta2013_SPI}:
\begin{align}
  {\mathbf v}^{\pi}
   &= {\bm \Pi}^{\pi} \left({\mathbf r} + \gamma {\mathbf P} {\mathbf v}^{\pi}\right)
    = {\mathbf r}^{\pi} + \gamma {\mathbf P}^{\pi} {\mathbf v}^{\pi}
    = \left({\mathbf I} - \gamma {\mathbf P}^{\pi}\right)^{-1} {\mathbf r}^{\pi}
  ,\nonumber\\
  {\mathbf q}^{\pi}
   &= {\mathbf r} + \gamma {\mathbf P} {\bm \Pi}^{\pi} {\mathbf q}^{\pi}
    = {\mathbf r} + \gamma {\mathbf P} {\mathbf v}^{\pi}
  ,\nonumber\\
  \bar{\mathbf A}^{\pi}_{\pi'}
   &= {\bm \Pi}^{\pi'} {\mathbf A}^{\pi}
    = \left( {\bm \Pi}^{\pi'} - {\bm \Pi}^{\pi} \right) {\mathbf q}^{\pi}
  ,\nonumber\\
  \eta(\pi)
   &= {{\bm \rho}_{0}}^{\top} {\mathbf v}^{\pi}
    = {{\bm \rho}_{0}}^{\top} \left({\mathbf I} - \gamma {\mathbf P}^{\pi}\right)^{-1} {\mathbf r}^{\pi}
    = {{\bm \rho}^{\pi}}^{\top}{\mathbf r}^{\pi}
  ,\label{eq:vector_performance}
\end{align}
where
$\eta(\pi)$ is a scalar,
${\mathbf r}^{\pi}, {\mathbf v}^{\pi}, {\bm \rho}_{0}, {\bm \rho}^{\pi}$ 
and $\bar{\mathbf A}^{\pi}_{\pi'}$ are vectors of size $|\mathcal{S}|$,
${\mathbf r}, {\mathbf q}^{\pi}$ and ${\mathbf A}^{\pi}$ are vectors of size $|\mathcal{S}||\mathcal{A}|$,
${\mathbf P}$ is a stochastic matrix of size $|\mathcal{S}||\mathcal{A}|\times|\mathcal{S}|$ 
which contains the state transition probability distribution: ${\mathbf P} ((s,a),s') = \mathcal{P}(s'|s,a)$,
${\bm \Pi}^{\pi}$ is a stochastic matrix of size $|\mathcal{S}|\times|\mathcal{S}||\mathcal{A}|$ 
which contains the policy: ${\bm \Pi}^{\pi} (s,(s,a)) = \pi(a|s)$,
and ${\mathbf P}^{\pi} = {\bm \Pi}^{\pi} {\mathbf P}$ is a stochastic matrix of size $|\mathcal{S}|\times|\mathcal{S}|$
which represents the state transition matrix under the policy $\pi$.
Let $M$ be a matrix whose entries are $m_{ij}$,
then $\|M\|_{1} = \max_{j} \sum_{i}|m_{ij}|$, $\|M\|_{\infty} = \max_{i} \sum_{j}|m_{ij}|$ and $\|M\|_{1} = \|M^\top\|_{\infty}$.

\section{On- and Off-Policy Monotonic Policy Improvement Guarantee}

In this section, we show that the monotonic policy improvement is guaranteed 
from on- and off-policy mixture samples.

First, we introduce two lemmas to provide the main theorem.
The first lemma states that
the difference between the performances of any policies $\pi$ and $\pi'$ is given as a function of the advantage.
\begin{lemma}\label{lm:difference_of_performances_as_advantage}
  \citep[lemma 6.1]{Kakade2002_CPI}
  Let $\pi$ and $\pi'$ be any stationary policies. Then:
  \begin{align}
    \eta(\pi') - \eta(\pi) = 
    {{\bm \rho}^{\pi'}}^{\top} \bar{\mathbf A}^{\pi}_{\pi'}
    .
    \nonumber
  \end{align}
\end{lemma}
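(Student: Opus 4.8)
The plan is to work entirely within the matrix notation of \eqref{eq:vector_performance}, which already packages every identity I need. The first ingredient is the resolvent form of the discounted future state distribution: summing the Neumann series $\sum_{t=0}^{\infty}\gamma^{t}({\mathbf P}^{\pi'})^{t}$ that propagates $\rho_{0}$ forward under $\pi'$ gives ${{\bm \rho}^{\pi'}}^{\top} = {{\bm \rho}_{0}}^{\top}\left({\mathbf I} - \gamma {\mathbf P}^{\pi'}\right)^{-1}$, the transpose of the resolvent already appearing in the expression for $\eta$.

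Next I would rewrite the right-hand side advantage vector using the definitions in \eqref{eq:vector_performance}. Starting from $\bar{\mathbf A}^{\pi}_{\pi'} = \left({\bm \Pi}^{\pi'} - {\bm \Pi}^{\pi}\right){\mathbf q}^{\pi}$ and substituting ${\mathbf q}^{\pi} = {\mathbf r} + \gamma {\mathbf P} {\mathbf v}^{\pi}$, I would apply ${\mathbf r}^{\pi'} = {\bm \Pi}^{\pi'}{\mathbf r}$ and ${\mathbf P}^{\pi'} = {\bm \Pi}^{\pi'}{\mathbf P}$ to the $\pi'$ term, and the Bellman identity ${\mathbf v}^{\pi} = {\bm \Pi}^{\pi}{\mathbf q}^{\pi}$ to the $\pi$ term, collapsing the expression to $\bar{\mathbf A}^{\pi}_{\pi'} = {\mathbf r}^{\pi'} + \gamma {\mathbf P}^{\pi'}{\mathbf v}^{\pi} - {\mathbf v}^{\pi}$.

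Finally I would multiply this vector on the left by ${{\bm \rho}^{\pi'}}^{\top}$. The ${\mathbf r}^{\pi'}$ term yields ${{\bm \rho}_{0}}^{\top}\left({\mathbf I} - \gamma {\mathbf P}^{\pi'}\right)^{-1}{\mathbf r}^{\pi'} = {{\bm \rho}_{0}}^{\top}{\mathbf v}^{\pi'} = \eta(\pi')$. The key cancellation is that the remaining two terms combine into $\left(\gamma {\mathbf P}^{\pi'} - {\mathbf I}\right){\mathbf v}^{\pi} = -\left({\mathbf I} - \gamma {\mathbf P}^{\pi'}\right){\mathbf v}^{\pi}$, so the resolvent $\left({\mathbf I} - \gamma {\mathbf P}^{\pi'}\right)^{-1}$ is annihilated exactly and what survives is $-{{\bm \rho}_{0}}^{\top}{\mathbf v}^{\pi} = -\eta(\pi)$. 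Adding the two contributions gives the claim.

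I do not expect a serious obstacle: once the resolvent form of ${\bm \rho}^{\pi'}$ is in hand the argument is bookkeeping. The one point demanding care is the deliberate mismatch that $\bar{\mathbf A}^{\pi}_{\pi'}$ is built from the values of $\pi$ yet weighted by the future-state occupancy of $\pi'$; it is precisely this mismatch that produces the telescoping combination $\gamma {\mathbf P}^{\pi'}{\mathbf v}^{\pi} - {\mathbf v}^{\pi}$, and recognizing that the factor $\left({\mathbf I} - \gamma {\mathbf P}^{\pi'}\right)$ reappears there to cancel the inverse is the crux of the computation. Equivalently, one could give the original trajectory argument of \cite{Kakade2002_CPI}, writing the expected discounted sum of $A^{\pi}(s_{t},a_{t})$ along $\pi'$-trajectories, expanding each advantage as $\mathcal{R}(s_{t},a_{t}) + \gamma V^{\pi}(s_{t+1}) - V^{\pi}(s_{t})$, and telescoping the value terms down to $-V^{\pi}(s_{0})$; but the matrix version above is the more direct fit for the notation already established.
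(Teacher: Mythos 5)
Your proof is correct. Note that the paper itself supplies no proof of this lemma at all---it is imported by citation from \citet[lemma~6.1]{Kakade2002_CPI}, whose original argument is the trajectory-based telescoping one you sketch at the end: expand $A^{\pi}(s_t,a_t)=\mathcal{R}(s_t,a_t)+\gamma V^{\pi}(s_{t+1})-V^{\pi}(s_t)$ along $\pi'$-trajectories and telescope. Your matrix derivation is a genuinely different (and fully rigorous) route: each step checks out, namely ${{\bm \rho}^{\pi'}}^{\top}={{\bm \rho}_{0}}^{\top}({\mathbf I}-\gamma{\mathbf P}^{\pi'})^{-1}$, the reduction $\bar{\mathbf A}^{\pi}_{\pi'}={\mathbf r}^{\pi'}+\gamma{\mathbf P}^{\pi'}{\mathbf v}^{\pi}-{\mathbf v}^{\pi}$ via ${\bm \Pi}^{\pi}{\mathbf q}^{\pi}={\mathbf v}^{\pi}$, and the cancellation of the resolvent against $({\mathbf I}-\gamma{\mathbf P}^{\pi'}){\mathbf v}^{\pi}$. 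What your version buys is that it stays entirely inside the matrix notation of Eq.~(\ref{eq:vector_performance}) that the paper has already set up (and that Lemma~\ref{lm:bound_of_mixture_state_distributions} reuses), making the lemma self-contained here rather than outsourced; what the trajectory argument buys is that it generalizes more transparently beyond the finite state-action setting where the stochastic matrices and their inverses live. Either is acceptable; there is no gap.
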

The second lemma gives a bound on the inner product of two vectors.
\begin{lemma}\label{lm:bound_of_inner_product}
  \citep[Corollary 2.4]{Haviv1984}
  Let ${\mathbf e}$ be a column vector of all entries are one. 
  For a vector ${\mathbf x}$ such that ${\mathbf x}^{\top}{\mathbf e}=0$
  and any vector ${\mathbf y}$,
  it holds that
  \begin{align}
    |{\mathbf x}^{\top}{\mathbf y}|
     \leq \|{\mathbf x}\|_{1} \frac{\max_{i,j}|{\mathbf y}_{i} - {\mathbf y}_{j}|}{2}
    .
    \nonumber
  \end{align}
\end{lemma}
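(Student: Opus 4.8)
The plan is to exploit the single hypothesis $\mathbf{x}^{\top}\mathbf{e}=0$ through a shift-invariance trick: because the entries of $\mathbf{x}$ sum to zero, the inner product $\mathbf{x}^{\top}\mathbf{y}$ is unchanged when $\mathbf{y}$ is replaced by $\mathbf{y}-c\,\mathbf{e}$ for \emph{any} scalar $c$, since $\mathbf{x}^{\top}(\mathbf{y}-c\,\mathbf{e}) = \mathbf{x}^{\top}\mathbf{y} - c\,\mathbf{x}^{\top}\mathbf{e} = \mathbf{x}^{\top}\mathbf{y}$. This freedom to recenter $\mathbf{y}$ is the whole engine of the proof, and I would establish it as the first step.

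Next I would apply the standard $\ell_{1}$--$\ell_{\infty}$ duality (H\"older's inequality) to the recentered product, giving
\begin{align*}
  |\mathbf{x}^{\top}\mathbf{y}|
   = |\mathbf{x}^{\top}(\mathbf{y}-c\,\mathbf{e})|
   \leq \|\mathbf{x}\|_{1}\,\|\mathbf{y}-c\,\mathbf{e}\|_{\infty}
   = \|\mathbf{x}\|_{1}\,\max_{i}|\mathbf{y}_{i}-c|
\end{align*}
for every scalar $c$. Since the left-hand side does not depend on $c$, the bound may be tightened by minimizing the right-hand side over $c$; that is, I would choose $c$ to be the Chebyshev center of the entries of $\mathbf{y}$.

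Finally I would carry out this one-dimensional minimization explicitly. Writing $M=\max_{i}\mathbf{y}_{i}$ and $m=\min_{i}\mathbf{y}_{i}$, the function $c\mapsto\max_{i}|\mathbf{y}_{i}-c|=\max\{|M-c|,|m-c|\}$ is minimized at the midpoint $c^{\star}=(M+m)/2$, where its value equals $(M-m)/2=\tfrac{1}{2}\max_{i,j}|\mathbf{y}_{i}-\mathbf{y}_{j}|$. Substituting $c^{\star}$ into the displayed inequality yields exactly the claim. I do not expect a genuine obstacle here: the only conceptual move is recognizing that the zero-sum constraint licenses the recentering, after which the result is a two-line consequence of H\"older's inequality and an elementary optimization of the max-norm over translates of $\mathbf{y}$.
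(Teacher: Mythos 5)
Your argument is correct and complete: the zero-sum condition makes $\mathbf{x}^{\top}\mathbf{y}$ invariant under the shift $\mathbf{y}\mapsto\mathbf{y}-c\,\mathbf{e}$, H\"older's inequality in the $\ell_{1}$--$\ell_{\infty}$ pairing then bounds the product by $\|\mathbf{x}\|_{1}\max_{i}|\mathbf{y}_{i}-c|$, and minimizing over $c$ at the midpoint of the range of $\mathbf{y}$ yields exactly the stated factor $\tfrac{1}{2}\max_{i,j}|\mathbf{y}_{i}-\mathbf{y}_{j}|$. The paper itself gives no proof of this lemma, citing it as Corollary 2.4 of Haviv (1984); your recenter-then-H\"older derivation is the standard self-contained route to that result and fills the gap correctly.
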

Next we provide a bound to the difference between the $\gamma$-discounted state distributions 
for any stationary policies $\pi, \pi'$ and $\beta$.
\begin{lemma}\label{lm:bound_of_mixture_state_distributions}
  Let $\pi, \pi'$ and $\beta$ be any stationary policies 
  for an infinite-horizon MDP with state transition probability ${\mathbf P}$.
  Let $\alpha \in [0,1]$ be a mixture coefficient of on- and off-policy samples.
  Then the $L_{1}$-norm of the difference between the $\gamma$-discounted state distributions
  is upper bounded as follows:
  \begin{align}
    &\left\|{\bm \rho}^{\pi'} - (\alpha {\bm \rho}^{\pi} + (1-\alpha){\bm \rho}^{\beta})\right\|_{1}
    \nonumber\\
    &\qquad
    \leq
    \frac{\gamma}{1-\gamma}
    \left(
      \alpha\left\|{\mathbf P}^{\pi'} - {\mathbf P}^{\pi}\right\|_{\infty}
       + (1-\alpha) \left\|{\mathbf P}^{\pi'} - {\mathbf P}^{\beta}\right\|_{\infty}
    \right)
    \left\|\left(
      {\mathbf I}-\gamma{\mathbf P}^{\pi'}
    \right)^{-1}\right\|_{\infty}
    .
    \nonumber
  \end{align}
\end{lemma}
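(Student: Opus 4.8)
The plan is to first reduce the mixture bound to two single-policy bounds by a convexity argument, and then to control each single-policy state-distribution difference with the resolvent identity.

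First, since $\alpha + (1-\alpha) = 1$, I would write ${\bm \rho}^{\pi'} = \alpha {\bm \rho}^{\pi'} + (1-\alpha){\bm \rho}^{\pi'}$ and regroup the left-hand side as $\alpha({\bm \rho}^{\pi'} - {\bm \rho}^{\pi}) + (1-\alpha)({\bm \rho}^{\pi'} - {\bm \rho}^{\beta})$. The triangle inequality then gives
\begin{align*}
  \left\|{\bm \rho}^{\pi'} - (\alpha {\bm \rho}^{\pi} + (1-\alpha){\bm \rho}^{\beta})\right\|_{1}
  \leq \alpha \left\|{\bm \rho}^{\pi'} - {\bm \rho}^{\pi}\right\|_{1} + (1-\alpha)\left\|{\bm \rho}^{\pi'} - {\bm \rho}^{\beta}\right\|_{1},
\end{align*}
so it suffices to bound $\|{\bm \rho}^{\pi'} - {\bm \rho}^{\pi}\|_1$ (and identically $\|{\bm \rho}^{\pi'} - {\bm \rho}^{\beta}\|_1$) by $\frac{\gamma}{1-\gamma}\|{\mathbf P}^{\pi'} - {\mathbf P}^{\pi}\|_\infty \|({\mathbf I} - \gamma {\mathbf P}^{\pi'})^{-1}\|_\infty$.

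Next, from the performance identities in \eqref{eq:vector_performance} I read off the closed form ${\bm \rho}^{\pi} = ({\mathbf I} - \gamma ({\mathbf P}^{\pi})^{\top})^{-1}{\bm \rho}_0$; that is, the discounted state distribution is the resolvent of the transposed transition matrix applied to ${\bm \rho}_0$. Writing $A = {\mathbf I} - \gamma ({\mathbf P}^{\pi'})^{\top}$ and $B = {\mathbf I} - \gamma ({\mathbf P}^{\pi})^{\top}$, the resolvent identity $A^{-1} - B^{-1} = A^{-1}(B-A)B^{-1}$ together with $B - A = \gamma(({\mathbf P}^{\pi'})^{\top} - ({\mathbf P}^{\pi})^{\top})$ yields
\begin{align*}
  {\bm \rho}^{\pi'} - {\bm \rho}^{\pi}
  = \gamma ({\mathbf I} - \gamma ({\mathbf P}^{\pi'})^{\top})^{-1}\left(({\mathbf P}^{\pi'})^{\top} - ({\mathbf P}^{\pi})^{\top}\right){\bm \rho}^{\pi},
\end{align*}
where I have used $B^{-1}{\bm \rho}_0 = {\bm \rho}^{\pi}$ to reintroduce the state distribution on the right.

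Taking the $L_1$ norm and applying submultiplicativity, I would convert each operator $1$-norm into the $\infty$-norm of the corresponding untransposed matrix via the stated identity $\|M\|_1 = \|M^\top\|_\infty$, producing the factors $\|({\mathbf I} - \gamma {\mathbf P}^{\pi'})^{-1}\|_\infty$ and $\|{\mathbf P}^{\pi'} - {\mathbf P}^{\pi}\|_\infty$. The last ingredient is $\|{\bm \rho}^{\pi}\|_1 = \sum_{t=0}^\infty \gamma^t = \frac{1}{1-\gamma}$, which holds because ${\bm \rho}^{\pi}$ is a nonnegative vector summing the discounted visitation probabilities. Multiplying these together gives the single-policy bound, and substituting back into the triangle inequality produces the claim. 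The one place demanding care is the bookkeeping of transposes: because ${\bm \rho}^{\pi}$ sits on the \emph{left} of the value identity, the resolvent naturally appears as $({\mathbf P}^{\pi})^{\top}$, and I must choose to expand the $\pi'$-resolvent (rather than the $\pi$-resolvent) so that the surviving $\|({\mathbf I}-\gamma{\mathbf P}^{\pi'})^{-1}\|_\infty$ factor matches the statement.
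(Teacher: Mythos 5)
Your proof is correct, and it reaches the stated bound without any loss. The core mechanism is the same as the paper's: both arguments ultimately rest on expressing the difference of discounted state distributions through the resolvent $({\mathbf I}-\gamma{\mathbf P}^{\pi'})^{-1}$ applied to a difference of transition kernels, and on $\|{\bm \rho}^{\pi}\|_{1}=1/(1-\gamma)$. Where you differ is in the organization: the paper keeps the mixture $\alpha{\bm\rho}^{\pi}+(1-\alpha){\bm\rho}^{\beta}$ intact, writes the recursion ${{\bm\rho}^{\pi}}^{\top}={{\bm\rho}_{0}}^{\top}+\gamma{{\bm\rho}^{\pi}}^{\top}{\mathbf P}^{\pi}$ for each policy, cancels the ${\bm\rho}_{0}$ terms using $\alpha+(1-\alpha)=1$, and solves the resulting recursion for the mixture difference in one pass via successive substitution and the Neumann series. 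You instead apply the triangle inequality up front to reduce the mixture statement to two single-policy bounds on $\|{\bm\rho}^{\pi'}-{\bm\rho}^{\pi}\|_{1}$ and $\|{\bm\rho}^{\pi'}-{\bm\rho}^{\beta}\|_{1}$, each handled by the resolvent identity $A^{-1}-B^{-1}=A^{-1}(B-A)B^{-1}$. Because the claimed bound is exactly the convex combination of the two single-policy bounds, your early triangle inequality loses nothing, and your route makes explicit that the mixture lemma is just a convexity consequence of the standard single-policy perturbation bound; the paper's route has the minor advantage of producing an exact closed-form identity for the mixture difference (its equation for the difference before norms are taken), which could in principle support tighter bounds than the convex combination. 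Your bookkeeping of transposes, the use of $\|M\|_{1}=\|M^{\top}\|_{\infty}$, and the choice to expand the $\pi'$-resolvent are all handled correctly.
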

\begin{proof}
  Eq. (\ref{eq:vector_performance}) indicates that for any policy $\pi$ and any initial state distribution $\rho_{0}$,
  $\gamma$-discounted state distribution ${{\bm \rho}^{\pi}}$ satisfies
  \begin{align}
    {{\bm \rho}^{\pi}}^{\top} = {{\bm \rho}_{0}}^{\top} + \gamma {{\bm \rho}^{\pi}}^{\top} {\mathbf P}^{\pi}
    .
    \nonumber
  \end{align}
  It follows that
  \begin{align}
    &\left({{\bm \rho}^{\pi'}} - 
    \left( \alpha {{\bm \rho}^{\pi}} + (1-\alpha) {{\bm \rho}^{\beta}} \right) \right)^{\top}
    \nonumber\\
    &= \gamma {{\bm \rho}^{\pi'}}^{\top} {\mathbf P}^{\pi'}
    - 
    \left( 
      \alpha \gamma {{\bm \rho}^{\pi}}^{\top} {\mathbf P}^{\pi}
      + (1-\alpha) \gamma {{\bm \rho}^{\beta}}^{\top} {\mathbf P}^{\beta}
    \right)
    \nonumber\\
    &= \gamma 
      \left({{\bm \rho}^{\pi'}} - 
      \left( \alpha {{\bm \rho}^{\pi}} + (1-\alpha) {{\bm \rho}^{\beta}} \right) \right)^{\top}
      {\mathbf P}^{\pi'}
    + 
    \gamma \left( 
    \alpha {{\bm \rho}^{\pi}}^{\top} \left( {\mathbf P}^{\pi'} - {\mathbf P}^{\pi} \right)
    + (1-\alpha) {{\bm \rho}^{\beta}}^{\top} \left( {\mathbf P}^{\pi'} - {\mathbf P}^{\beta}\right)
    \right)
    \nonumber\\
    &= \gamma \left( 
    \alpha {{\bm \rho}^{\pi}}^{\top} \left( {\mathbf P}^{\pi'} - {\mathbf P}^{\pi} \right)
    + (1-\alpha) {{\bm \rho}^{\beta}}^{\top} \left( {\mathbf P}^{\pi'} - {\mathbf P}^{\beta}\right)
    \right)
    \sum_{t=0}^{\infty} \left( \gamma {\mathbf P}^{\pi'} \right)^{t}
    \label{eq:proof:lm:bound_of_mixture_state_distributions:Neumann_series}
    \\
    &= \gamma \left( 
    \alpha {{\bm \rho}^{\pi}}^{\top} \left( {\mathbf P}^{\pi'} - {\mathbf P}^{\pi} \right)
    + (1-\alpha) {{\bm \rho}^{\beta}}^{\top} \left( {\mathbf P}^{\pi'} - {\mathbf P}^{\beta}\right)
    \right) 
    \left( {\mathbf I}-\gamma{\mathbf P}^{\pi'} \right)^{-1}
    . 
    \label{eq:proof:lm:bound_of_mixture_state_distributions:Neumann_series_converges}
  \end{align}
  The equality (\ref{eq:proof:lm:bound_of_mixture_state_distributions:Neumann_series}) follows from the successive substitution.
  Since ${\mathbf P}^{\pi'}$ is a stochastic matrix, 
  the inverse of ${\mathbf I}-\gamma{\mathbf P}^{\pi'}$ exsits for any $\gamma < 1$,
  thus Neumann series converges 
  and (\ref{eq:proof:lm:bound_of_mixture_state_distributions:Neumann_series_converges}) follows.
  Therefore, it follows that
  \begin{align*}
    &\left\|{{\bm \rho}^{\pi'}} - 
      \left( \alpha {{\bm \rho}^{\pi}} + (1-\alpha) {{\bm \rho}^{\beta}} \right) \right\|_{1}
    \nonumber\\
    &=\left\|\left({{\bm \rho}^{\pi'}} - 
      \left( \alpha {{\bm \rho}^{\pi}} + (1-\alpha) {{\bm \rho}^{\beta}} \right) \right)^{\top}\right\|_{\infty}
    \nonumber\\
    &= \gamma \left\|
    \left( 
    \alpha {{\bm \rho}^{\pi}}^{\top} \left( {\mathbf P}^{\pi'} - {\mathbf P}^{\pi} \right)
    + (1-\alpha) {{\bm \rho}^{\beta}}^{\top} \left( {\mathbf P}^{\pi'} - {\mathbf P}^{\beta}\right)
    \right) 
    \left( {\mathbf I}-\gamma{\mathbf P}^{\pi'} \right)^{-1}
    \right\|_{\infty}
    \nonumber\\
    &\leq \gamma 
    \left( 
    \alpha \left\|{{\bm \rho}^{\pi}}^{\top}\right\|_{\infty} \left\| {\mathbf P}^{\pi'} - {\mathbf P}^{\pi} \right\|_{\infty}
    + (1-\alpha) \left\|{{\bm \rho}^{\beta}}^{\top}\right\|_{\infty} \left\| {\mathbf P}^{\pi'} - {\mathbf P}^{\beta}\right\|_{\infty}
    \right) 
    \left\|\left( {\mathbf I}-\gamma{\mathbf P}^{\pi'} \right)^{-1}
    \right\|_{\infty}
    \nonumber\\
    &\leq 
    \frac{\gamma}{1-\gamma}
    \left(
      \alpha\left\|{\mathbf P}^{\pi'} - {\mathbf P}^{\pi}\right\|_{\infty}
       + (1-\alpha) \left\|{\mathbf P}^{\pi'} - {\mathbf P}^{\beta}\right\|_{\infty}
    \right)
    \left\|\left(
      {\mathbf I}-\gamma{\mathbf P}^{\pi'}
    \right)^{-1}\right\|_{\infty}
    .
  \end{align*}
\end{proof}
The following corollary 
gives a looser but model-free bound.
\begin{corollary}\label{co:model_free_bound_of_mixture_state_distributions}
  Let $\pi, \pi'$ and $\beta$ be any stationary policies.
  Let $\alpha \in [0,1]$ be a mixture coefficient of on- and off-policy samples.
  Then the $L_{1}$-norm of the difference between the $\gamma$-discounted state distributions
  is upper bounded as follows:
  \begin{align}
    &\left\|{\bm \rho}^{\pi'} - \left(\alpha {\bm \rho}^{\pi} + (1-\alpha){\bm \rho}^{\beta}\right)\right\|_{1}
    \leq
    \frac{\gamma}{(1-\gamma)^{2}}
    \left(
      \alpha\left\|{\bm \Pi}^{\pi'} - {\bm \Pi}^{\pi}\right\|_{\infty}
       + (1-\alpha) \left\|{\bm \Pi}^{\pi'} - {\bm \Pi}^{\beta}\right\|_{\infty}
    \right)
    .
    \nonumber
  \end{align}
\end{corollary}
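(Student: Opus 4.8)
The plan is to obtain this corollary directly from Lemma \ref{lm:bound_of_mixture_state_distributions} by replacing the two model-dependent factors, namely the transition-matrix norms $\|{\mathbf P}^{\pi'} - {\mathbf P}^{\pi}\|_{\infty}$ and the resolvent norm $\left\|({\mathbf I}-\gamma{\mathbf P}^{\pi'})^{-1}\right\|_{\infty}$, with quantities that do not reference the unknown state transition kernel ${\mathbf P}$. Both replacements rest on the submultiplicativity of the induced $\infty$-norm together with the fact that ${\mathbf P}$, ${\mathbf P}^{\pi}$, ${\mathbf P}^{\pi'}$ and ${\mathbf P}^{\beta}$ are all stochastic matrices, so each has $\infty$-norm equal to one.

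First I would bound the resolvent. Since ${\mathbf P}^{\pi'}$ is stochastic we have $\|{\mathbf P}^{\pi'}\|_{\infty}=1$, and expanding $({\mathbf I}-\gamma{\mathbf P}^{\pi'})^{-1}$ as the convergent Neumann series $\sum_{t=0}^{\infty}(\gamma{\mathbf P}^{\pi'})^{t}$ (valid for $\gamma<1$, exactly as already argued in the proof of Lemma \ref{lm:bound_of_mixture_state_distributions}) and applying the triangle inequality and submultiplicativity gives
\begin{align*}
  \left\|\left({\mathbf I}-\gamma{\mathbf P}^{\pi'}\right)^{-1}\right\|_{\infty}
  \leq \sum_{t=0}^{\infty}\gamma^{t}\left\|{\mathbf P}^{\pi'}\right\|_{\infty}^{t}
  = \sum_{t=0}^{\infty}\gamma^{t}
  = \frac{1}{1-\gamma}.
\end{align*}

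Next I would rewrite the transition-matrix differences using the identity ${\mathbf P}^{\pi}={\bm \Pi}^{\pi}{\mathbf P}$ from the preliminaries, so that ${\mathbf P}^{\pi'}-{\mathbf P}^{\pi}=\left({\bm \Pi}^{\pi'}-{\bm \Pi}^{\pi}\right){\mathbf P}$. Submultiplicativity then yields
\begin{align*}
  \left\|{\mathbf P}^{\pi'}-{\mathbf P}^{\pi}\right\|_{\infty}
  \leq \left\|{\bm \Pi}^{\pi'}-{\bm \Pi}^{\pi}\right\|_{\infty}\left\|{\mathbf P}\right\|_{\infty}
  = \left\|{\bm \Pi}^{\pi'}-{\bm \Pi}^{\pi}\right\|_{\infty},
\end{align*}
where the last equality uses that ${\mathbf P}$ is stochastic. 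The analogous bound holds for the $\beta$ term. Substituting these two estimates into the conclusion of Lemma \ref{lm:bound_of_mixture_state_distributions} and collecting the extra factor $1/(1-\gamma)$ produces the claimed coefficient $\gamma/(1-\gamma)^{2}$.

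There is essentially no genuine obstacle here: the corollary is a mechanical weakening of the lemma. The only points requiring care are bookkeeping ones --- confirming that the induced $\infty$-norm is submultiplicative, that $\|{\mathbf P}\|_{\infty}=\|{\mathbf P}^{\pi'}\|_{\infty}=1$ for row-stochastic matrices, and that the Neumann-series manipulation is legitimate --- all of which are already available from the preliminaries and the preceding proof.
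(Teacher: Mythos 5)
Your proof is correct and follows essentially the same route as the paper: both apply Lemma \ref{lm:bound_of_mixture_state_distributions}, factor ${\mathbf P}^{\pi'}-{\mathbf P}^{\pi}=({\bm \Pi}^{\pi'}-{\bm \Pi}^{\pi}){\mathbf P}$ with $\|{\mathbf P}\|_{\infty}=1$, and bound the resolvent by the Neumann series to pick up the extra $1/(1-\gamma)$. Your write-up is in fact slightly cleaner, since the paper's displayed sum contains a typo ($\gamma$ in place of $\gamma^{t}$) that your version avoids.
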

\begin{proof}
  From Lemma \ref{lm:bound_of_mixture_state_distributions}, it follows that
  \begin{align*}
    &
    \left\|{\bm \rho}^{\pi'} - (\alpha {\bm \rho}^{\pi} + (1-\alpha){\bm \rho}^{\beta})\right\|_{1}
    \nonumber\\
    &\leq
    \frac{\gamma}{1-\gamma}
    \left(
      \alpha\left\|{\mathbf P}^{\pi'} - {\mathbf P}^{\pi}\right\|_{\infty}
       + (1-\alpha) \left\|{\mathbf P}^{\pi'} - {\mathbf P}^{\beta}\right\|_{\infty}
    \right)
    \left\|\left(
      {\mathbf I}-\gamma{\mathbf P}^{\pi'}
    \right)^{-1}\right\|_{\infty}
    \nonumber\\
    &\leq
    \frac{\gamma}{1-\gamma}
    \left(
      \alpha\left\|{\bm \Pi}^{\pi'} - {\bm \Pi}^{\pi}\right\|_{\infty}
       + (1-\alpha) \left\|{\bm \Pi}^{\pi'} - {\bm \Pi}^{\beta}\right\|_{\infty}
    \right)
    \left\|{\mathbf P}\right\|_{\infty}
    \sum_{t=0}^{\infty} \gamma \left\| {\mathbf P}^{\pi'} \right\|_{\infty}^{t}
    \nonumber\\
    &\leq
    \frac{\gamma}{(1-\gamma)^{2}}
    \left(
      \alpha\left\|{\bm \Pi}^{\pi'} - {\bm \Pi}^{\pi}\right\|_{\infty}
       + (1-\alpha) \left\|{\bm \Pi}^{\pi'} - {\bm \Pi}^{\beta}\right\|_{\infty}
    \right)
    .
  \end{align*}
\end{proof}
The main theorem is given by combining 
Lemma \ref{lm:difference_of_performances_as_advantage}, 
Lemma \ref{lm:bound_of_inner_product},
and Corollary \ref{co:model_free_bound_of_mixture_state_distributions}.
\begin{theorem}\label{tm:on_and_off_mono_policy_improvement_with_TV}
  (On- and Off-Policy Monotonic Policy Improvement Guarantee)
  Let $\pi$ and $\pi'$ be any stationary target policies and $\beta$ be any stationary behavior policy.
  Let $\alpha \in [0,1]$ be a mixture coefficient of on- and off-policy samples.
  Then the difference between the performances of $\pi'$ and $\pi$ is lower bounded as follows:
  \begin{align}
    &\eta(\pi') - \eta(\pi) 
    % \nonumber\\
    % &
    \geq \alpha{{\bm \rho}^{\pi}}^{\top} \bar{\mathbf A}^{\pi}_{\pi'}
    + (1-\alpha){{\bm \rho}^{\beta}}^{\top} \bar{\mathbf A}^{\pi}_{\pi'}
    \nonumber\\
    &\qquad
    - \frac{\gamma}{(1-\gamma)^{2}}
    \left(
      \alpha\left\|{\bm \Pi}^{\pi'} - {\bm \Pi}^{\pi}\right\|_{\infty}^{2}
       + (1-\alpha) \left\|{\bm \Pi}^{\pi'} - {\bm \Pi}^{\pi}\right\|_{\infty}
                        \left\|{\bm \Pi}^{\pi'} - {\bm \Pi}^{\beta}\right\|_{\infty}
    \right)
    \left\|{\mathbf q}^{\pi}\right\|_{\infty}
    .
    \nonumber
  \end{align}
\end{theorem}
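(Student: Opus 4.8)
The plan is to start from the exact performance-difference identity in Lemma~\ref{lm:difference_of_performances_as_advantage}, namely $\eta(\pi') - \eta(\pi) = {{\bm \rho}^{\pi'}}^{\top} \bar{\mathbf A}^{\pi}_{\pi'}$, and then to replace the inaccessible on-policy weighting ${\bm \rho}^{\pi'}$ by the mixture $\alpha {\bm \rho}^{\pi} + (1-\alpha){\bm \rho}^{\beta}$, paying for the substitution with a controlled error term. Concretely, I would set ${\mathbf x} = {\bm \rho}^{\pi'} - (\alpha {\bm \rho}^{\pi} + (1-\alpha){\bm \rho}^{\beta})$, so that ${{\bm \rho}^{\pi'}}^{\top}\bar{\mathbf A}^{\pi}_{\pi'} = \alpha {{\bm \rho}^{\pi}}^{\top}\bar{\mathbf A}^{\pi}_{\pi'} + (1-\alpha){{\bm \rho}^{\beta}}^{\top}\bar{\mathbf A}^{\pi}_{\pi'} + {\mathbf x}^{\top}\bar{\mathbf A}^{\pi}_{\pi'}$. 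The first two summands are exactly the leading terms of the claimed bound, so it remains only to lower bound the cross term ${\mathbf x}^{\top}\bar{\mathbf A}^{\pi}_{\pi'}$ by $-|{\mathbf x}^{\top}\bar{\mathbf A}^{\pi}_{\pi'}|$ and to upper bound that absolute value.

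To estimate $|{\mathbf x}^{\top}\bar{\mathbf A}^{\pi}_{\pi'}|$ I would invoke Lemma~\ref{lm:bound_of_inner_product}, which first requires checking the orthogonality condition ${\mathbf x}^{\top}{\mathbf e} = 0$. This holds because each unnormalized $\gamma$-discounted state distribution has the same total mass, ${{\bm \rho}^{\pi}}^{\top}{\mathbf e} = \sum_{t=0}^{\infty}\gamma^{t} = 1/(1-\gamma)$, independently of the policy; since $\alpha + (1-\alpha) = 1$, the convex mixture shares this mass and ${\mathbf x}$ sums to zero. Lemma~\ref{lm:bound_of_inner_product} with ${\mathbf y} = \bar{\mathbf A}^{\pi}_{\pi'}$ then yields $|{\mathbf x}^{\top}\bar{\mathbf A}^{\pi}_{\pi'}| \leq \|{\mathbf x}\|_{1}\,\frac{1}{2}\max_{i,j}|(\bar{\mathbf A}^{\pi}_{\pi'})_{i} - (\bar{\mathbf A}^{\pi}_{\pi'})_{j}|$.

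Next I would bound the two factors separately. For the range term, $\frac{1}{2}\max_{i,j}|(\bar{\mathbf A}^{\pi}_{\pi'})_{i} - (\bar{\mathbf A}^{\pi}_{\pi'})_{j}| \leq \|\bar{\mathbf A}^{\pi}_{\pi'}\|_{\infty}$, and using the matrix identity $\bar{\mathbf A}^{\pi}_{\pi'} = ({\bm \Pi}^{\pi'} - {\bm \Pi}^{\pi}){\mathbf q}^{\pi}$ from Eq.~(\ref{eq:vector_performance}) together with submultiplicativity of the $\infty$-norm gives $\|\bar{\mathbf A}^{\pi}_{\pi'}\|_{\infty} \leq \|{\bm \Pi}^{\pi'} - {\bm \Pi}^{\pi}\|_{\infty}\|{\mathbf q}^{\pi}\|_{\infty}$. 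For the $L_{1}$ factor I would apply Corollary~\ref{co:model_free_bound_of_mixture_state_distributions} directly to $\|{\mathbf x}\|_{1}$. Multiplying these two estimates and distributing the product reproduces exactly the error term $\frac{\gamma}{(1-\gamma)^{2}}(\alpha\|{\bm \Pi}^{\pi'} - {\bm \Pi}^{\pi}\|_{\infty}^{2} + (1-\alpha)\|{\bm \Pi}^{\pi'} - {\bm \Pi}^{\pi}\|_{\infty}\|{\bm \Pi}^{\pi'} - {\bm \Pi}^{\beta}\|_{\infty})\|{\mathbf q}^{\pi}\|_{\infty}$; subtracting it from the leading terms completes the proof.

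The step I expect to matter most is the orthogonality verification ${\mathbf x}^{\top}{\mathbf e} = 0$: Lemma~\ref{lm:bound_of_inner_product} is only applicable because the discounted occupancies all integrate to the same constant and $\alpha$ is a genuine convex weight, and it is precisely this cancellation that lets the \emph{range} (rather than the full magnitude) of $\bar{\mathbf A}^{\pi}_{\pi'}$ enter the bound. Everything else is a routine chain of norm inequalities, the only bookkeeping subtlety being to keep the factor $\|{\bm \Pi}^{\pi'} - {\bm \Pi}^{\pi}\|_{\infty}$ coming from the advantage range separate from the two mixture-dependent factors $\|{\bm \Pi}^{\pi'} - {\bm \Pi}^{\pi}\|_{\infty}$ and $\|{\bm \Pi}^{\pi'} - {\bm \Pi}^{\beta}\|_{\infty}$ coming from Corollary~\ref{co:model_free_bound_of_mixture_state_distributions}, so that the final products match the squared and cross terms in the statement.
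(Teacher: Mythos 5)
Your proposal is correct and follows essentially the same route as the paper: the same mixture decomposition of ${{\bm \rho}^{\pi'}}^{\top}\bar{\mathbf A}^{\pi}_{\pi'}$, the same application of Lemma~\ref{lm:bound_of_inner_product} after checking that the residual sums to zero, Corollary~\ref{co:model_free_bound_of_mixture_state_distributions} for the $L_{1}$ factor, and the bound $\frac{\epsilon}{2}\leq\|\bar{\mathbf A}^{\pi}_{\pi'}\|_{\infty}\leq\|{\bm \Pi}^{\pi'}-{\bm \Pi}^{\pi}\|_{\infty}\|{\mathbf q}^{\pi}\|_{\infty}$ for the range factor. If anything, your verification of the orthogonality condition is slightly more careful than the paper's (the unnormalized occupancies each sum to $1/(1-\gamma)$, not $1$, but the cancellation holds either way).
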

\begin{proof}
  From Lemma \ref{lm:difference_of_performances_as_advantage}, it follows that
  \begin{align}
    \eta(\pi') - \eta(\pi)
    &= {{\bm \rho}^{\pi'}}^{\top} \bar{\mathbf A}^{\pi}_{\pi'}
    \nonumber\\
    &= {{\bm \rho}^{\pi'}}^{\top} \bar{\mathbf A}^{\pi}_{\pi'}
     + \alpha 
        \left(
        {{\bm \rho}^{\pi}}^{\top} \bar{\mathbf A}^{\pi}_{\pi'} 
        - {{\bm \rho}^{\pi}}^{\top} \bar{\mathbf A}^{\pi}_{\pi'}
        \right)
     + (1-\alpha)
        \left(
        {{\bm \rho}^{\beta}}^{\top} \bar{\mathbf A}^{\pi}_{\pi'} 
        - {{\bm \rho}^{\beta}}^{\top} \bar{\mathbf A}^{\pi}_{\pi'}
        \right)      
    \nonumber\\
    &= \alpha {{\bm \rho}^{\pi}}^{\top} \bar{\mathbf A}^{\pi}_{\pi'}
       + (1-\alpha) {{\bm \rho}^{\beta}}^{\top} \bar{\mathbf A}^{\pi}_{\pi'}
     - \left({\bm \rho}^{\pi'} - \left(\alpha {{\bm \rho}^{\pi}} + (1-\alpha) {{\bm \rho}^{\beta}} \right) \right)^{\top} 
      \bar{\mathbf A}^{\pi}_{\pi'}
    ,
    \nonumber
  \end{align}
  where $\alpha \in [0,1]$.
  Note that for any policy $\pi$, 
  the $\gamma$-discounted state distribution 
  ${\bm \rho}^{\pi}$ satisfies ${{\bm \rho}^{\pi}}^{\top} {\mathbf e}\!=\!1$,
  thus 
  $({\bm \rho}^{\pi'} \!-\! (\alpha {{\bm \rho}^{\pi}} \!+\! (1\!-\!\alpha) {{\bm \rho}^{\beta}} ) )^{\top} {\mathbf e}\!=\!0$.
  Therefore from Lemma \ref{lm:bound_of_inner_product}
  and Corollary \ref{co:model_free_bound_of_mixture_state_distributions},
  it follows that
  \begin{align}
    &\eta(\pi') - \eta(\pi)
    \nonumber\\
    &= \alpha {{\bm \rho}^{\pi}}^{\top} \bar{\mathbf A}^{\pi}_{\pi'}
       + (1-\alpha) {{\bm \rho}^{\beta}}^{\top} \bar{\mathbf A}^{\pi}_{\pi'}
     - \left({\bm \rho}^{\pi'} - \left(\alpha {{\bm \rho}^{\pi}} + (1-\alpha) {{\bm \rho}^{\beta}} \right) \right)^{\top} 
      \bar{\mathbf A}^{\pi}_{\pi'}
    \nonumber\\
    &\geq \alpha {{\bm \rho}^{\pi}}^{\top} \bar{\mathbf A}^{\pi}_{\pi'}
       + (1-\alpha) {{\bm \rho}^{\beta}}^{\top} \bar{\mathbf A}^{\pi}_{\pi'}
     - \left\|{\bm \rho}^{\pi'} - \left(\alpha {{\bm \rho}^{\pi}} + (1-\alpha) {{\bm \rho}^{\beta}} \right) \right\|_{1} 
      \frac{\epsilon}{2}
    \nonumber\\
    &
    \geq \alpha{{\bm \rho}^{\pi}}^{\top} \bar{\mathbf A}^{\pi}_{\pi'}
    + (1-\alpha){{\bm \rho}^{\beta}}^{\top} \bar{\mathbf A}^{\pi}_{\pi'}
    - \frac{\gamma}{(1-\gamma)^{2}}
    \left(
      \alpha\left\|{\bm \Pi}^{\pi'} - {\bm \Pi}^{\pi}\right\|_{\infty}
       + (1-\alpha) \left\|{\bm \Pi}^{\pi'} - {\bm \Pi}^{\beta}\right\|_{\infty}
    \right)
    \frac{\epsilon}{2}
    ,
    \nonumber
  \end{align}
  where $\epsilon = \max_{s,s'}|\bar{\mathbf A}^{\pi}_{\pi'}(s) - \bar{\mathbf A}^{\pi}_{\pi'}(s')|$.
  The theorem follows by upper bounding $\epsilon / 2$:
  \begin{align*}
    \frac{\epsilon}{2} 
    &\leq \left\|\bar{\mathbf A}^{\pi}_{\pi'}\right\|_{\infty}
    = \left\|\left({\bm \Pi}^{\pi'} - {\bm \Pi}^{\pi}\right){\mathbf q}^{\pi}\right\|_{\infty}
    \leq \left\|{\bm \Pi}^{\pi'} - {\bm \Pi}^{\pi}\right\|_{\infty}
          \left\|{\mathbf q}^{\pi}\right\|_{\infty}
    .
  \end{align*}
\end{proof}
Note that for any {\itshape stochastic} policies,
\begin{align*}
  \left\|{\bm \Pi}^{\pi'} - {\bm \Pi}^{\pi}\right\|_{\infty}
  = \underset{s\in\mathcal{S}}{\max}
  \sum_{a\in\mathcal{A}}\left|\pi'(a|s)-\pi(a|s)\right|
  = \underset{s\in\mathcal{S}}{\max}
  \sum_{a\in\mathcal{A}}\left|\pi(a|s)-\pi'(a|s)\right|
\end{align*}
is identical to the maximum total variation distance
between the policies with respect to the state
\footnote{
  $D_{\rm TV}(\pi\|\pi') = \frac{1}{2}\sum_{a\in\mathcal{A}}\left|\pi(a|s)-\pi'(a|s)\right|$ 
  is another common definition of the total variation distance.
},
$D_{\rm TV}^{\rm max}(\pi\|\pi') = {\rm max}_{s\in\mathcal{S}} D_{\rm TV}(\pi(\cdot|s)\|\pi'(\cdot|s))$.
Thus Pinsker's inequality, 
\begin{align*}
  \frac{1}{2}D_{\rm TV} (\pi \| \pi')^2 \leq D_{\rm KL} (\pi \| \pi')
  , 
\end{align*}
where $D_{\rm KL} (\pi \| \pi')$ is the Kullback-Leibler divergence between two policies,
yields following corollary.
\begin{corollary}\label{co:on_and_off_mono_policy_improvement_with_KL}
  Let $\pi$ and $\pi'$ be any stochastic stationary target policies and $\beta$ be any stochastic stationary behavior policy.
  For $\alpha \in [0,1]$,
  the difference between the performances of $\pi'$ and $\pi$ is lower bounded as follows:
  \begin{align}
    &\eta(\pi') - \eta(\pi) 
    % \nonumber\\
    % &
    \geq 
    \alpha
    {{\bm \rho}^{\pi}}^{\top} \bar{\mathbf A}^{\pi}_{\pi'}
    + (1-\alpha)
    {{\bm \rho}^{\beta}}^{\top} \bar{\mathbf A}^{\pi}_{\pi'}
    \nonumber\\
    &\qquad- \frac{2\gamma}{(1-\gamma)^{2}}
    \left(
      \alpha
      D_{\rm KL}^{\rm max} (\pi \| \pi') 
       + (1-\alpha) \left(D_{\rm KL}^{\rm max} (\pi \| \pi') D_{\rm KL}^{\rm max} (\beta \| \pi') \right)^{1/2}
    \right)
    \left\|{\mathbf q}^{\pi}\right\|_{\infty}
    .
    \label{eq:on_and_off_mono_policy_improvement_with_KL}
  \end{align}
\end{corollary}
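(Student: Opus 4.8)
The plan is to take the bound of Theorem~\ref{tm:on_and_off_mono_policy_improvement_with_TV} as given and to replace each policy-difference norm by a Kullback--Leibler quantity using Pinsker's inequality. Since these norms enter the bound with a negative sign, upper bounding them produces a valid (though looser) lower bound on $\eta(\pi')-\eta(\pi)$. The advantage terms $\alpha{{\bm \rho}^{\pi}}^{\top}\bar{\mathbf A}^{\pi}_{\pi'} + (1-\alpha){{\bm \rho}^{\beta}}^{\top}\bar{\mathbf A}^{\pi}_{\pi'}$ are carried over verbatim, so only the penalty term requires work.

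First I would record, as the surrounding text establishes, that for stochastic policies $\|{\bm \Pi}^{\pi'}-{\bm \Pi}^{\pi}\|_{\infty} = \max_{s}\sum_{a}|\pi'(a|s)-\pi(a|s)| = D_{\rm TV}^{\rm max}(\pi\|\pi')$, using the convention in which the total variation distance omits the factor $1/2$ (the one under which the stated Pinsker inequality is correctly normalized). Pinsker's inequality $\tfrac{1}{2}D_{\rm TV}(\pi\|\pi')^2 \le D_{\rm KL}(\pi\|\pi')$ holds at every state $s$; bounding its right-hand side by $D_{\rm KL}^{\rm max}(\pi\|\pi')$ and taking the maximum over $s$ on the left yields $\|{\bm \Pi}^{\pi'}-{\bm \Pi}^{\pi}\|_{\infty}^{2} = (D_{\rm TV}^{\rm max}(\pi\|\pi'))^2 \le 2D_{\rm KL}^{\rm max}(\pi\|\pi')$, and likewise $\|{\bm \Pi}^{\pi'}-{\bm \Pi}^{\beta}\|_{\infty}^{2} \le 2D_{\rm KL}^{\rm max}(\beta\|\pi')$.

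Next I would treat the two penalty terms separately. The squared term $\alpha\|{\bm \Pi}^{\pi'}-{\bm \Pi}^{\pi}\|_{\infty}^{2}$ is bounded directly by $2\alpha\,D_{\rm KL}^{\rm max}(\pi\|\pi')$. For the product term I would apply Pinsker to each factor and multiply the two resulting square-root bounds, obtaining $\|{\bm \Pi}^{\pi'}-{\bm \Pi}^{\pi}\|_{\infty}\|{\bm \Pi}^{\pi'}-{\bm \Pi}^{\beta}\|_{\infty} \le 2\big(D_{\rm KL}^{\rm max}(\pi\|\pi')\,D_{\rm KL}^{\rm max}(\beta\|\pi')\big)^{1/2}$. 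Substituting both into the bound of Theorem~\ref{tm:on_and_off_mono_policy_improvement_with_TV} and pulling the common factor $2$ to the front turns $\tfrac{\gamma}{(1-\gamma)^2}$ into $\tfrac{2\gamma}{(1-\gamma)^2}$, which is precisely (\ref{eq:on_and_off_mono_policy_improvement_with_KL}).

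The main obstacle here is bookkeeping rather than any deep estimate. One must keep the direction of the inequality straight: the total-variation norms are subtracted, so upper bounding each of them weakens the lower bound in the correct direction and the conclusion remains valid. One must also respect the paper's total-variation normalization so that the Pinsker constant emerges as $2$ rather than some other factor, and take care that the maximum over states interacts correctly with the (monotone, nonnegative) squaring. The product term is the only place needing slightly more than a verbatim substitution, since it requires bounding a product of two square roots; this is immediate once Pinsker is applied separately to the pairs $(\pi,\pi')$ and $(\beta,\pi')$.
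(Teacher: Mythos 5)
Your proposal is correct and matches the paper's own (implicit) argument: the corollary is obtained exactly by substituting Pinsker's inequality, under the paper's unhalved total-variation convention, into the bound of Theorem~\ref{tm:on_and_off_mono_policy_improvement_with_TV}, bounding the squared term and the product term separately so that the factor $2$ emerges in front of $\gamma/(1-\gamma)^2$. Your attention to the normalization and to the interaction of the max over states with squaring is exactly the bookkeeping the paper leaves to the reader.
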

\begin{remark}\label{co:KL_for_beta_can_be_ignored}
  Corollary \ref{co:on_and_off_mono_policy_improvement_with_KL} states that
  the penalty to the policy improvement is governed by 
  $D_{\rm KL}^{\rm max} (\pi \| \pi'), D_{\rm KL}^{\rm max} (\beta \| \pi')$ and $\alpha$.
  $D_{\rm KL}^{\rm max} (\beta \| \pi')$ indicates the `off-policy-ness'.
  % which possibly takes large value.
  % However, i
  In the penalty term, $D_{\rm KL}^{\rm max} (\beta \| \pi')$ is multiplied 
  by $D_{\rm KL}^{\rm max} (\pi \| \pi')$ and $1-\alpha$,
  thus, the monotonic policy improvement could be established with sufficiently small $D_{\rm KL}^{\rm max} (\pi \| \pi')$ 
  and appropriate value of $\alpha$.
\end{remark}
In order to improve the policy monotonically, we should choose the policy $\pi'$ 
with which the right hand side of (\ref{eq:on_and_off_mono_policy_improvement_with_KL}) is positive.
However, as discussed by \cite{Schulman2015}, 
evaluating $D_{\rm KL}^{\rm max} (\pi \| \pi')$ or $D_{\rm KL}^{\rm max} (\beta \| \pi')$ is intractable in general 
because it requires to calculate KL divergence at every point in the state space.
Instead, we consider the following expected KL divergence: 
\begin{align}
  &\alpha \mathbb{E}_{s\sim\rho^\pi} \left[ D_{\rm KL} (\pi \| \pi') \right]
  + (1\!-\!\alpha) \left( \mathbb{E}_{s\sim\rho^\pi} \left[ D_{\rm KL} (\pi \| \pi') \right]
              \mathbb{E}_{s\sim\rho^{\beta}} \left[ D_{\rm KL} (\beta \| \pi') \right]\right)^{1/2}
  .
  \label{eq:mixture_metric}
\end{align}
Note that the metric,
\begin{align}
  \mathbb{E}_{s\sim\rho^\pi} \left[ D_{\rm KL} (\pi \| \pi') \right]
  % \leq \delta
  ,
  \label{eq:on_policy_metric}
\end{align}
is identical to the one used in the literature of the natural policy gradient
\citep{Kakade2001_NPG,Bagnell2003,Peters2003,Morimura2005_NTD_E}.
Analogously, an optimization procedure 
which uses the metric (\ref{eq:mixture_metric}) and
applies the bound (\ref{eq:on_and_off_mono_policy_improvement_with_KL}) approximately 
can be regarded as a variant of off-policy natural policy gradient method.

\section{Experiment}

In order to support our theoretical result, 
we evaluate the naive application of our bound in two classic benchmark problems.
Note that the method presented here is just one possible implementation 
to perform monotonic policy improvement approximately from on- and off-policy mixture samples.

\subsection{TRPO with Experience Replay}

First, we propose to directly use the experience replay \citep{Lin1992} 
in the trust region policy optimization scheme \citep{Schulman2015}.

Analogous to the argument in the Remark \ref{co:KL_for_beta_can_be_ignored},
in the mixture metric \ref{eq:mixture_metric},
$\mathbb{E}_{s\sim\rho^{\beta}} \left[ D_{\rm KL} (\beta \| \pi') \right]$ is multiplied by 
$\mathbb{E}_{s\sim\rho^\pi} \left[ D_{\rm KL} (\pi \| \pi') \right]$
and $1-\alpha$.
Thus, here we simply use the on-policy metric (\ref{eq:on_policy_metric}) as a constraint,
and investigate whether the monotonic policy improvement could be established 
with small $\mathbb{E}_{s\sim\rho^\pi} \left[ D_{\rm KL} (\pi \| \pi') \right]$ 
and large value of $\alpha$.

Suppose that we would like to optimize the policy $\pi_{\theta}$ with parameter $\theta$.
Then as done in TRPO \citep{Schulman2015},
the constrained optimization problem we should solve to update $\theta$ is:
\begin{align}
  &\underset{\theta'}{\rm maximize} \quad 
  L(\theta',\theta,\beta,\alpha)
  \nonumber\\
  &\qquad
  =
  \alpha \mathbb{E}_{s\sim\rho^{\pi_{\theta}}, a\sim\pi_{\theta}} 
  \left[ \frac{\pi_{\theta'}(a|s)}{\pi_{\theta}(a|s)} A^{\pi_{\theta}}(a|s) \right]
  % \nonumber\\
  % &\qquad
  + (1\!-\!\alpha) \mathbb{E}_{s\sim\rho^{\beta}, a\sim\beta} 
  \left[ \frac{\pi_{\theta'}(a|s)}{\beta(a|s)} A^{\pi_{\theta}}(a|s) \right]
  ,
  \label{eq:TRPO_with_ER:objective}
  \\
  &{\rm subject~to} \quad \mathbb{E}_{s\sim\rho^{\pi_{\theta}}} 
    \left[ D_{\rm KL} (\pi_{\theta}(\cdot|s) \| \pi_{\theta'}(\cdot|s)) \right] \leq \delta
  .
  \label{eq:TRPO_with_ER:on-policy_constraint}
\end{align}
By setting $\alpha = 1$, proposed method reduces to TRPO.
Note that $\alpha$ and $\beta$ can be varied at each update.
As the sampling from $\rho^{\beta}$ and $\beta$, we propose to use experience replay \citep{Lin1992}.
The optimization procedure in each training epoch is as follows:
\begin{enumerate}
  \item perform rollout with the policy $\pi_{\theta}$ and obtain on-policy trajectory,
  \item append on-policy trajectory to the replay buffer,
  \item draw off-policy trajectories from the replay buffer,
  \item solve the constrained optimization problem 
        (\ref{eq:TRPO_with_ER:objective}, \ref{eq:TRPO_with_ER:on-policy_constraint}) 
        using the on- and off-policy trajectories
        to generate new policy $\pi_{\theta'}$.
\end{enumerate}

\subsection{Experiment on Open AI Gym}

The experiment is conducted on the {\itshape Open AI Gym}.
The tasks are {\itshape Acrobot-v1} and {\itshape Pendulumn-v0}.
The agent is implemented based on the TRPO in the {\itshape baselines},
and modified to deal with off-policy samples.
Both the policy and the state value are approximated by feedforward neural network with two hidden layers, 
both of which consist of $32$ tanh units.
The surrogate objective (\ref{eq:TRPO_with_ER:objective}) is approximated by the generalized advantage estimation \citep{Schulman2016}.
The state value function is updated by using the $\lambda$-return as the target with on-policy trajectory.
The decay rate is set to $\lambda = 0.98$,
and the discount factor is set to $\gamma = 0.99$.
The trust region is set to $\delta = 0.01$.
Single on-policy trajectory consists of $1000$ transitions.
Previous $100$ trajectories are stored in the replay buffer and $10$ trajectries are drawn as the off-policy samples.
The learning results for various value of mixture coefficient, $\alpha$, 
are shown in Figure \ref{fig:learning_result}.
Each learning result is the average of $10$ independent runs with different seeds.
For Acrobot-v1,
the on- and off-policy mixture learning with $\alpha = 0.5, 0.75, 0.8, 0.99$ outperformed the on-policy TRPO, $\alpha = 1.0$.
Furthermore, monotonic policy improvement was established from off-policy samples only, , $\alpha = 0.0$.
For Pendulumn-v0,
the on- and off-policy mixture samples with $\alpha \geq 0.9$ accelerated the learning 
and resulted in the faster learning than TRPO, $\alpha = 1.0$.
However, the smaller $\alpha$ becomes, the slower the learning progresses.
This empirical result emphasizes the importance to deal with the mixture metric (\ref{eq:mixture_metric}) as the constraint.
\begin{figure}[t]
  \subfigure[Acrobot-v1]{
    \includegraphics[width=0.5\columnwidth,clip]{./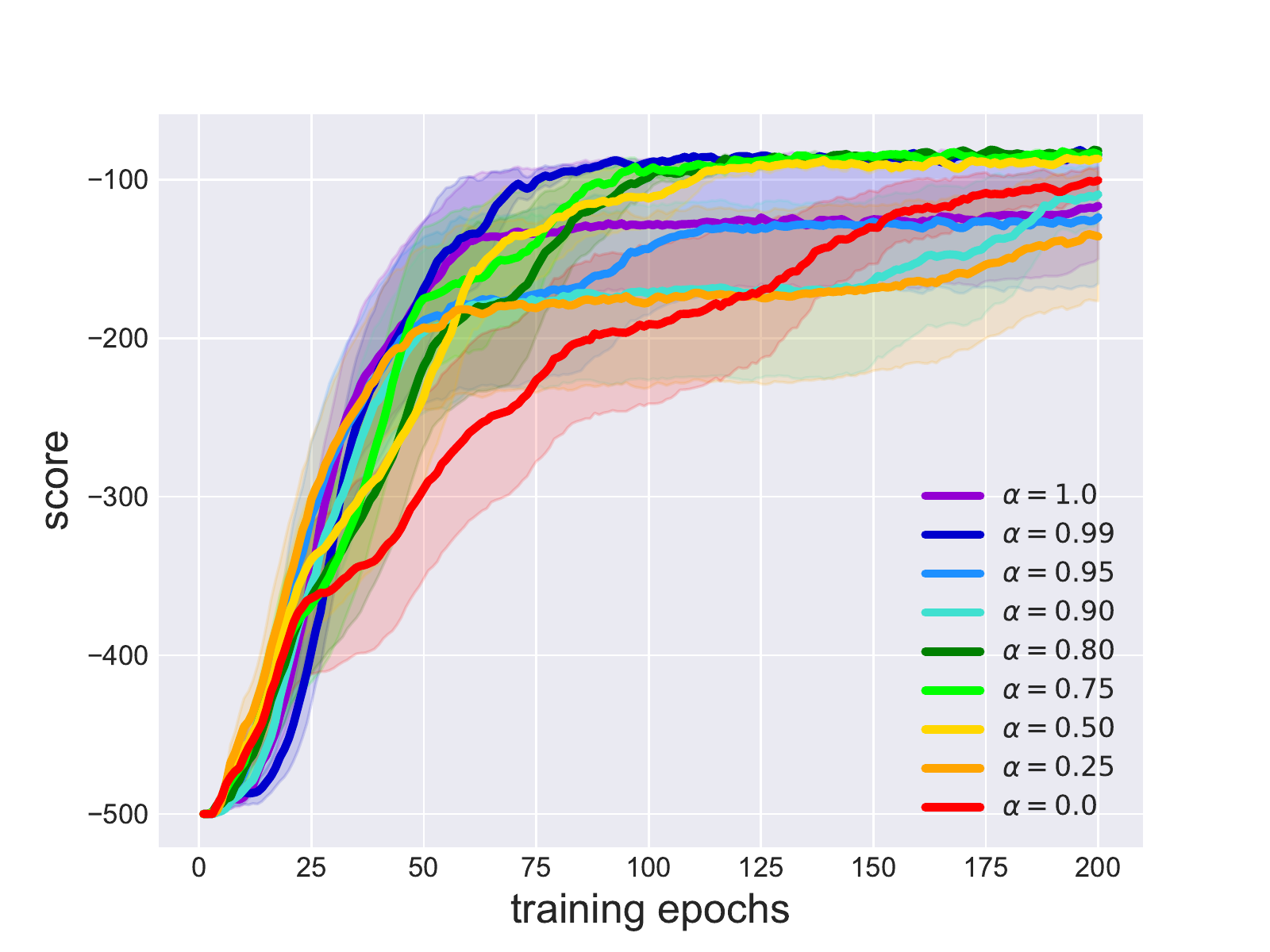}
  }
  \subfigure[Pendulumn-v0]{
    \includegraphics[width=0.5\columnwidth,clip]{./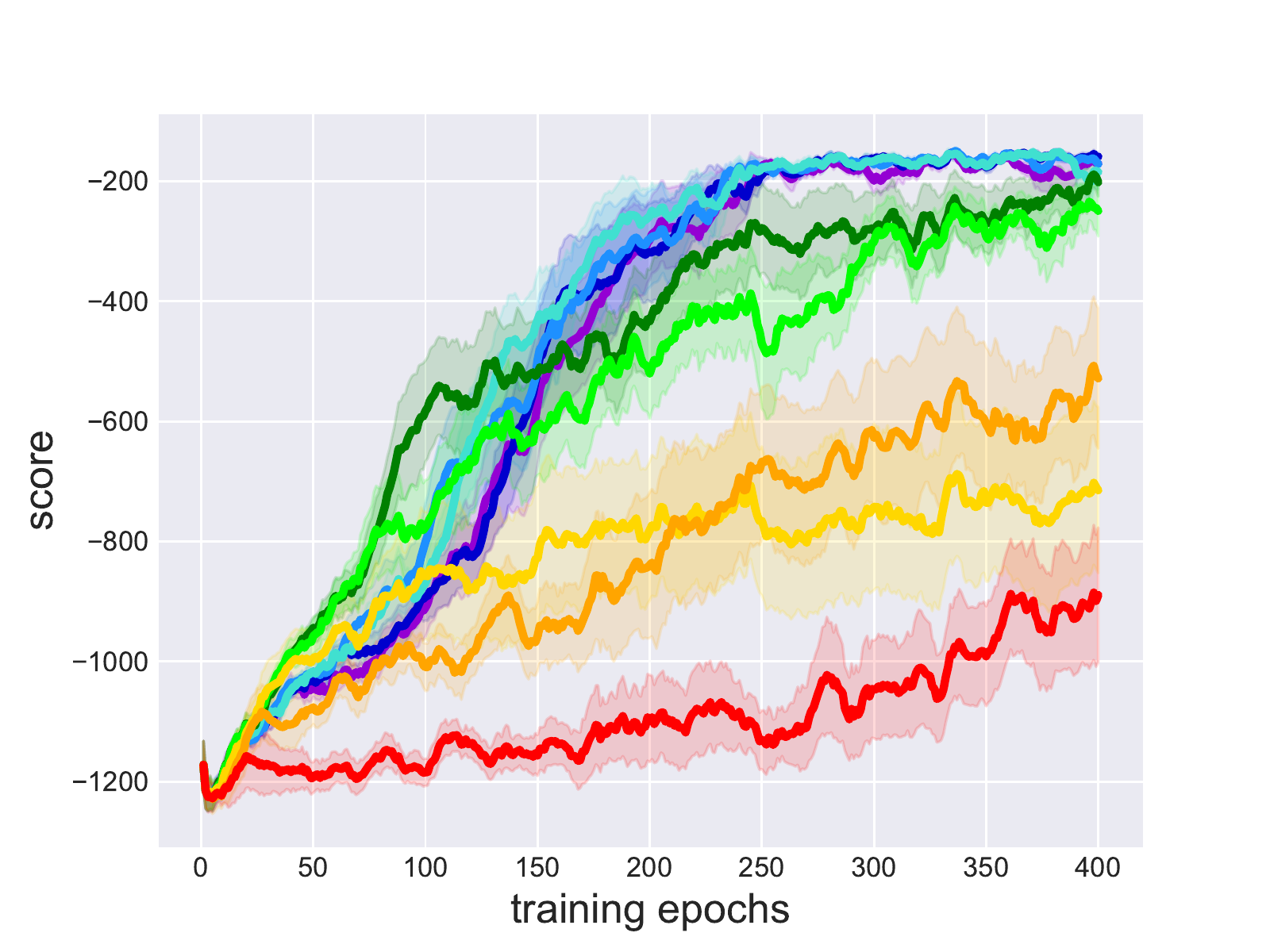}
  }
  \caption{Learning result in Open AI Gym. 
            $\alpha = 1.0$ corresponds to TRPO.
            Shaded areas denote standard errors.}
  \label{fig:learning_result}
\end{figure}

\section{Related Works}

The theoretical result and proposed method are extension of the works by \cite{Pirotta2013_SPI,Schulman2015}
to on- and off-policy mixture case.
\cite{Thomas2015_HCPI} proposed an algorithm with monotonic improvement which can use off-policy policy evaluation techique.
However, its computational complexity is high.
\cite{Gu2017_ICLR,Gu2017_NIPS} proposed to interpolate TRPO and deep deterministic policy gradient \citep{Lillicrap2016},
and \cite{Gu2017_NIPS} showed the performance bound for their on- and off-policy mixture update as well.
In our notation, the penalty terms in their performance bound has $D_\mathrm{KL}^{\rm max} (\pi \| \beta)$,
which is a constant with respect to the policy after update, $\pi'$.
In contrast, in Corollary \ref{co:on_and_off_mono_policy_improvement_with_KL},
our penalty terms has $D_\mathrm{KL}^{\rm max} (\beta \| \pi')$ instead of $D_\mathrm{KL}^{\rm max} (\pi \| \beta)$,
and which is multiplied by $D_\mathrm{KL}^{\rm max} (\pi \| \pi')$ and $1-\alpha$.
Furthermore, our bound is general in the sense that it does not specify how the policy is updatd.
Thus all the penalty terms in Eq. (\ref{eq:on_and_off_mono_policy_improvement_with_KL}) 
can be controlled in a well-designed optimization procedure to find $\pi'$.

\section{Conclusion and Outlook}

In this paper, we showed that the monotonic policy improvement is guaranteed 
from on- and off-policy mixture samples,
by lower bounding the performance difference of two policies.
An optimization scheme which applies the derived bound can be regarded as an off-policy natural policy gradient method.
In order to support the theoretical result, 
we provided the TRPO method using the experience replay as the naive application of our bound,
and evaluated the performance with various values of mixture coefficient.
An important direction is to find a practical algorithm which uses the metric (\ref{eq:mixture_metric}) as a constraint.
Determining $\alpha$ depending on 
$\mathbb{E}_{s\sim\rho^{\beta}} \left[ D_\mathrm{KL} (\beta \| \pi')\right]$
is also an interesting future work.

% \newpage

\vskip 0.2in
\bibliography{references}

\end{document}